\definecolor{light-gray}{gray}{0.95}
\newtheorem{theorem}{Theorem}
\newtheorem{lemma}{Lemma}[section]
\newenvironment{innerproof}[1][\proofname]
  {\par\normalfont \topsep6\p@ \@plus6\p@\relax
  \trivlist
  \item[\hskip\labelsep\itshape#1\@addpunct{.}]\ignorespaces}
  {\endtrivlist\@endpefalse}
\newtheorem{definition}[]{Definition}
\newcommand{\asp}[1]{\mbox{$\mathtt{#1}$}}
\DeclareMathOperator{\codeif}{\mathtt{:-}}
\DeclareMathOperator{\hash}{\mathtt{\#}}
\DeclareMathOperator{\naf}{\;\mathtt{not}\;}
\title{A Unifying Framework for Learning Argumentation Semantics}
\author{%
Zlatina Mileva$^1$\and
Antonis Bikakis$^2$\and
Fabio Aurelio D'Asaro$^{3}$\and
Mark Law$^{4}$ \\
Alessandra Russo$^1$ \\
\affiliations
$^1$Imperial College London, UK\\
$^2$University College London, UK\\
$^3$University of Verona, Italy\\
$^4$ILASP Limited, UK\\
\emails
zlatina.mileva19@imperial.ac.uk,
a.bikakis@ucl.ac.uk,
fabioaurelio.dasaro@univr.it,\\
mark@ilasp.com, alessandra.russo@imperial.ac.uk
}
\begin{document}

\maketitle

\begin{abstract}
Argumentation is a very active research field of Artificial Intelligence concerned with the representation and evaluation of arguments used in dialogues between humans and/or artificial agents. Acceptability semantics of formal argumentation systems define the criteria for the acceptance or rejection of arguments. Several software systems, known as argumentation solvers, have been developed to compute the accepted/rejected arguments using such criteria. These include systems that learn to identify the accepted arguments using non-interpretable methods. In this paper, we present a novel framework that uses an Inductive Logic Programming approach to learn the acceptability semantics for several abstract and structured argumentation frameworks in an interpretable way. Through an empirical evaluation we show that our framework outperforms existing argumentation solvers, thus opening up new future research directions in the area of formal argumentation and human-machine dialogues.
\end{abstract}

\section{Introduction}
Argumentation is a typical human activity that involves the use of arguments to resolve a conflict of opinion or to draw conclusions based on evidence that may be incomplete or contradictory. Understanding how humans reason with arguments has many real-world applications in Artificial Intelligence (AI), see, e.g., \citep{abghprstv:17}. This is why argumentation has become a well-established area within the field of knowledge representation and reasoning. The study of argumentation in AI has mainly focused on developing formal models of argumentation and methods for supporting argumentative tasks such as identifying arguments, evaluating arguments and drawing conclusions from them. Formal argumentation models fall within two general categories: abstract models, which treat arguments as atomic entities without an internal structure, with the \textit{Abstract Argumentation Framework} \cite[AAF;][]{pm:95} being the most prominent example; and structured models, which make explicit the premises and the claims of arguments as well as the relationship between them, with the \textit{Assumption-Based Argumentation Framework} \cite[ABA;][]{t:13} being a characteristic example. The acceptability semantics of each such model is a formal specification of the criteria for accepting or rejecting an argument in a given framework.

Argumentation solvers compute acceptable arguments in a given framework. These can usually be of two types, e.g., \textit{exact} solvers are manually designed to implement any given known semantics. One exemplary solver is ASPARTIX \citep{egw:10}, based on \textit{Answer Set Programming} \cite[ASP;][]{l:19}. Another type of solvers includes \textit{approximate} solvers, which use Machine Learning (ML) or Deep Learning (DL) techniques to learn the argumentation semantics from data. One such example is \citep{cb:20}, which uses Deep Learning to learn argumentation semantics for AAFs. Deep Learning techniques have proved to be very effective -- still, they lack transparency, thus leaving the tasks of learning argumentation semantics in an interpretable way an open research problem that we aim to tackle in the present paper.

This work aims to address this problem by proposing a framework that \textit{learns} argumentation semantics for a variety of argumentation frameworks in an efficient and \textit{interpretable} way. The proposed framework uses an \textit{Inductive Logic Programming} (ILP) approach called \textit{Learning from Answer Sets} \cite[LAS;][]{LawRB19} to learn Answer Set Programs that capture the acceptability semantics of several argumentation frameworks from a (small) set of examples. In this work we first show that our learning method is \textit{complete} with respect to some standard semantics, i.e., it can learn given known semantics if it is given appropriate examples in its training set. To this aim, we manually engineer argumentation frameworks from which these semantics can be learned and prove that, in fact, the corresponding learned Answer Set Programs are equivalent to the manually engineered ASP encodings of acceptability semantics used in ASPARTIX. We also show that, in many cases, they exhibit better time performance compared to ASPARIX. On the other hand, we show our method does not need as much training data with respect to state-of-the-art approximate solvers.

The paper is organized as follows. In Section \ref{sec:argumentation}, we provide the background on Argumentation and Learning Answer Set Programs. In Section \ref{sec:unifying} we introduce our framework and show how it can be used to learn ASP encodings for admissible, complete, grounded, preferred and stable semantics of four argumentation frameworks: AAF, Bipolar Argumentation Frameworks, Value-based Argumentation Frameworks and ABA. In the Evaluation section, we present the results of the empirical evaluation that compares our framewoork with state-of-the-art argumentation solvers for the stable and complete semanrics. Specifically, our framework learns semantics, which, at the time of inference, are much faster than the manually engineered ASPARTIX ASP encodings. Moreover, this is achieved in a more data-efficient manner compared to the large datasets required by approximate DL methods. Therefore, in a sense, our framework provides the best of both worlds for the considered cases. The paper concludes by discussing possible future directions for this research.

Finally, note that the datasets used for learning, the learning methods, the encoding and the benchmark code for reproducibility are all freely available at \href{https://github.com/dasaro/ArgLAS}{\texttt{https://github.com/dasaro/ArgLAS}}.


\section{Background}
\label{sec:argumentation}
\subsection{Argumentation}
The study of argumentation in AI has mostly focused on formal models of argumentation and computational methods for supporting argumentative tasks, such as identifying arguments, evaluating arguments, etc. One of the most influential models of argumentation is \emph{Abstract Argumentation Frameworks} \cite[AAF;][]{pm:95}. Its main characteristic is that it models arguments as atomic entities (without an internal structure), and the acceptability of an argument depends only on the attacks it receives from other arguments. An AAF is defined as a pair $\langle Arg,att \rangle$ consisting of a set of arguments $Arg$ and a binary attack relation $att$ on this set; for any two arguments $a,b \in Arg$, we say that $a$ attacks $b$ when $(a,b) 
\in att$. Acceptability semantics for AAFs are defined in terms of extensions, i.e., sets of arguments that we can reasonably accept.
An extension of an AAF, $S \subseteq Arg$ is called: \emph{conflict-free} iff it contains no arguments attacking each other; \emph{admissible} iff it is conflict-free and defends all its elements, i.e., for each argument $b \in Arg$ attacking an argument $a \in S$ there is an argument $c \in S$ attacking $b$; \emph{complete} iff it is admissible and contains all the arguments it defends; \emph{grounded} iff it is minimal (w.r.t. set inclusion) among the complete extensions; \emph{preferred} iff it is maximal among the admissible extensions; \emph{stable} iff it is conflict-free and attacks all the arguments it does not contain.

\emph{Bipolar Argumentation Frameworks}~\cite[BAF;][]{baf} extend AAFs with the notion of support between arguments. This is modelled as an additional binary relation on $Arg$, $sup$, which is disjoint from $att$. For two arguments $a_1,a_n \in Arg$, $a$ is a \emph{supported defeat} for $b$ iff there is a sequence of arguments $a_1,\dots,a_n \in Arg$ ($n \geq 1$) such that $(a_i,a_{i+1}) \in sup$ for $i\leq n-2$ and $(a_{n-1},a_n)\in att$; $a$ is a \emph{secondary defeat} for $b$ iff there is a sequence of arguments $a_1,\dots,a_n \in Arg$ ($n \geq 2$), such that $(a_1,a_2)\in att$ and $(a_i,a_{i+1}) \in sup$ for $i \geq 2$. An argument $a\in Arg$ is \emph{defended} by $S \subseteq Arg$ iff for each argument $b\in Arg$ that is a supported or secondary defeat for $a$, there exists $c \in S$ that is a supported or secondary defeat for $b$. The acceptability semantics of BAF are similar with AAF, but use the revised definitions of defeat and defense. 

\emph{Value-based Argumentation Frameworks}~\cite[VAF;][]{vaf} is another extension of AAFs that assigns (social, ethichal, etc.) values to arguments and uses preferences on values to resolve conflicts among arguments. Formally, a VAF (additionally to $Arg$ and $att$) includes a non-empty set of values, $V$, a function $val: Arg \mapsto V$ and a preference relation $valpref \subseteq V\times V$, which is transitive, irreflexive and assymetric. A preference relation on $Arg$, $pref$, is induced from $valpref$: for any two arguments $a,b \in Arg$, where $val(a)=u$ and $val(b)=v$, $(a,b)\in pref$ when $(v,u)\in valpref^+$ (the transitive closure of $valpref$). VAF uses the same acceptability semantics with AAF and BAF, but based on a different notion of defeat and defense: an argument $a$ is a \emph{defeat} for argument $b$ iff $a$ attacks $b$ and $(b,a) \notin pref$. An argument $a\in Arg$ is \emph{defended} by $S \subseteq Arg$ iff for each argument $b\in Arg$ that defeats $a$, there exists $c \in S$ that defeats $b$. 

\emph{Assumption-based Argumentation Frameworks}~\cite[ABA;][]{t:13} belong to a different family of argumentation frameworks where arguments are not treated as abstract entities. Instead, arguments are constructed from available knowledge, and attacks among arguments are derived from their internal structure. An ABA framework is a tuple $\langle \mathcal{L}, \mathcal{R}, \mathcal{A}, \overline{\phantom{a}} \rangle$, where $\mathcal{L}$ is a logical language, $\mathcal{R}$ a set of rules $\phi_1,\dots,\phi_n \rightarrow \phi$ over $\mathcal{L}$, $\mathcal{A} \subseteq \mathcal{L}$ a non-empty set of assumptions, and $\overline{\phantom{b}}$ a total mapping from $\mathcal{A}$ to $\mathcal{L}$, denoting e.g., that $\overline{a}$ is the contrary of $a$. We restrict our focus to \textit{flat} ABA, where no rule has an assumption in its head. An argument $A \vdash \psi$ is a deduction of $\psi \in \mathcal{L}$ from a set of assumptions $A \subseteq \mathcal{A}$ using a set of rules $R \in \mathcal{R}$. An argument $A \vdash \psi$ attacks another argument $A' \vdash \psi'$ iff $\psi = \overline{q}$ for some $q \in A'$. For the computation of acceptable arguments in a flat ABA  we use the acceptability semantics of AAFs.

\subsection{Learning Answer Set Programs}\label{sec:learninganswersetprograms}
\textit{Learning from Answer Sets}~\cite[LAS;][]{LawRB19} is a logic-based machine learning approach that extends Inductive Logic Programming~\citep{Muggleton1991} with methods~\citep{Lawetal2018,law2020fastlas} capable of learning from examples of interpretable knowledge represented as \textit{answer set programs}. Typically, an answer set program includes four types of rules: normal rules, choice rules, hard and weak constraints~\citep{gelfond_kahl_2014}. In this paper we consider answer set programs composed of normal rules and hard constraints. Given any (first-order logic) atoms $\asp{h}, \asp{b_1},\ldots, \asp{b_n}, \asp{c_{1}},\ldots,\asp{c_{m}}$, a \emph{normal rule} is of the form $\asp{h \codeif b_1,\ldots, b_n, \naf c_{1},\ldots,\naf c_{m}}$, where $\asp{h}$  is the \emph{head}, $\asp{b_1},\ldots, \asp{b_n}, \asp{c_{1}},\ldots,\asp{c_{m}}$ (collectively) is the \emph{body} of the rule and ``$\naf$'' represents negation as failure. Rules of the form $\asp{\codeif b_1,\ldots, b_n, \naf c_{1},\ldots,\naf c_{m}}$ are called \emph{constraints}. The Herbrand Base of an answer set program $P$, denoted $HB_P$, is the set of ground (variable free) atoms that can be formed from predicates and constants in $P$. The subsets of $HB_{P}$ are called the (Herbrand) \emph{interpretations} of $P$. Given a program $P$ and an interpretation $I$, the \emph{reduct} of $P$, denoted $P^{I}$, is constructed from the grounding of $P$ in three steps: firstly, remove rules the bodies of which contain the negation of an atom in I; secondly, remove all negative literals from the remaining rules; and finally, replace the head of any constraint with $\bot$ (where $\bot\not\in HB_{P}$). Any $I\subseteq HB_P$ is an \emph{answer set} of $P$ iff it is the minimal model of $P^{I}$. We denote with $AS(P)$ the set of answer sets of a program $P$. 

In addition to the above rules, the answer set solver \emph{clingo}~\citep{Gebser} also supports \emph{heuristic statements}. In this paper, we consider heuristic statements of the form $\asp{\hash heuristic\ a.\ [1@1, false]}$, where $\asp{a}$ is an atom. When combined with suitable flags in clingo, rather than computing the full set of answer sets, clingo will return the set of answer sets which, when projected over the ground instances of the atoms in the heuristic statements, are subset-minimal. Given any program $P$ containing heuristic statements, we write $AS^*(P)$ to refer to these subset-minimal answer sets.

The ILASP system~\citep{l:18} for solving LAS tasks has recently been updated with support for learning heuristic statements (using the \texttt{--learn-heuristics} flag). We now present a modified version of the notion of a \emph{context-dependent learning from answer sets}~\citep{LawRB16} task, formalising the heuristic learning task. 
A \textit{partial interpretation}, $e_{pi}$, is a pair of sets of ground atoms $\left \langle e^{inc}_{pi}, e^{exc}_{pi} \right \rangle$, called {\em inclusions} and {\em exclusions}, respectively. An interpretation $I$ \emph{extends} $e_{pi}$ iff $e_{pi}^{inc} \subseteq I$ and $e_{pi}^{exc} \cap I = \emptyset$. Examples come in the form of \emph{context dependent partial interpretations} (CDPI). A CDPI example $e$ is a pair $\langle e_{pi}, e_{ctx}\rangle$, where
$e_{pi}$ is a partial interpretation and $e_{ctx}$ is an answer set program called the context of $e$. An answer set program $P$ is said to \emph{accept} $e$ if there is at least one answer set $A$ in $AS^*(P \cup e_{ctx})$ that extends $e_{pi}$. A LAS framework uses \emph{mode declarations} as a form of language bias to specify the space of possible answer set programs that can be learned, referred to as \emph{hypothesis space}. Informally, a \emph{mode bias} is a pair of sets of mode declarations $\langle M_h,M_b\rangle$, where $M_h$ (resp. $M_b$) are called the head (resp. body) mode declarations. Together they characterise the predicates and type of arguments that could appear as head and body conditions of the rules to learn. For a full definition of mode bias, the reader is referred to~\citep{l:18}. 

\begin{definition}\label{def:contextdependentlearning}
A \emph{context-dependent learning from answer sets} ($ILP^{context}_{LAS}$) task is a tuple $T = \langle B, S_M, \langle E^+, E^-\rangle\rangle$, where $B$ is an answer set program, $S_M$ is a hypothesis space, $E^+$ and $E^-$ are finite sets of CDPIs called, respectively, positive and negative examples. A hypothesis $H \subseteq S_M$ is a solution of $T$ (written $H\in ILP^{context}_{LAS}(T)$) if and only if:
    (i)~${H\subseteq S_{M}}$;
    (ii)~${\forall e^+ \in E^+}$, $B\cup H$ accepts $e^{+}$; and
    (iii)~${\forall e^- \in E^-}$, $B\cup H$ does not accept $e^-$.
Such a solution is called \emph{optimal} if no smaller solution exists.
\end{definition}

\section{Unifying Framework For Learning Argumentation Semantics}\label{sec:unifying}
Our focus is on developing a unifying approach for learning definitions of semantics of argumentation frameworks from examples of arguments that are \textit{in} or \textit{out} of the extensions of given frameworks. We focus on the stable, complete, admissible grounded and preferred semantics of AAFs, BAFs and VAFs as defined in the previous section. To this aim, our framework uses the following predicates: $\asp{arg}$, $\asp{att}$, $\asp{support}$, $\asp{val}$, $\asp{valpref}$, $\asp{in}$, $\asp{out}$, $\asp{pref}$, $\asp{defeated}$, $\asp{not\_defended}$ and $\asp{supported}$, where atom $\asp{arg(X)}$ denotes that $\asp{X}$ is an argument; $\asp{att(X,Y)}$ denotes an attack from $\asp{X}$ to $\asp{Y}$; $\asp{support(X,Y)}$ denotes that $\asp{X}$ supports $\asp{Y}$; $\asp{val(X,V)}$ denotes that the value $\asp{V}$ is assigned to argument $\asp{X}$; $\asp{valpref(U,V)}$ denotes a preference of value $\asp{U}$ over value value $\asp{V}$; $\asp{in(X)}$ denotes that that argument $\asp{X}$ is in an extension, while $\asp{out(X)}$ denotes that $\asp{X}$ is out of an extension for the semantics to be learned; $\asp{pref(X,Y)}$ denotes that argument $\asp{X}$ is preferred over argument $\asp{Y}$; $\asp{defeated(X)}$, $\asp{not\_defended(X)}$ and $\asp{supported(X)}$ denote defeated, undefended and supported arguments, respectively.

Recall from Definition~\ref{def:contextdependentlearning} that in our setting a learning task is a tuple $\langle B, S_M, \langle E^+, E^-\rangle\rangle$. We now define these components in the context of our unifying framework for learning argumentation semantics. We refer to this unifying framework as 
$LAS_{arg}$.

The set of positive and negative context-dependent examples, $E^+$ and $E^-$, of $LAS_{arg}$ include examples that have as inclusion and exclusion sets, \textit{in} and \textit{out} facts; and as context, a set of facts encoding an argumentation framework. Consider, for example, the case of an AAF. A positive and a negative context-dependent example are, for instance, $e^+ = \#pos(\{out(a),$ $out(b)\},$ $\{in(a),$ $in(b)\},$ $\{arg(a).$ $arg(b).$ $att(a,b).$ $att(b,a).\})$ as a positive example, and
$e^- = \#neg(\{in(a),$ $in(b)\},$ $\{\},$ $\{arg(a).$ $arg(b).$ $att(a,b).$ $att(b,a).\})$ as a negative example. In these $\# pos$ and $\# neg$ statements, the first set specifies the inclusions, the second set specifies the exclusions, and the third set defines the context. In both examples, the context contains two arguments, $a$ and $b$, that attack each other. In the positive example $e^+$, both arguments are labelled as being $\asp{out}$, but not $\asp{in}$. In the negative example $e^-$, $a$ and $b$ are labelled as $\asp{in}$ in the inclusion set, indicating that there should be no extension of the argumentation framework that includes both arguments. In the case of BAF and VAF, examples may also contain \asp{support}, \asp{val} or \asp{valpref} facts in the context. 

The background knowledge $B$ of $LAS_{arg}$ includes additional definitions for special constructs of the argumentation frameworks under consideration. Specifically, it contains definitions for $\asp{defeated}$, $\asp{not\_defended}$ and $\asp{supported}$ arguments, and a $\asp{pref}$ relation over arguments. The most general form of background knowledge is shown in Listing~\ref{lst:backgroundknowledge}.

\begin{table}[h]
\setlength{\tabcolsep}{3pt}
\refstepcounter{table}\label{lst:backgroundknowledge}
\begin{tabular}{rl}
\hline
\multicolumn{2}{l}{Listing 1: Background knowledge $B$} \\
\hline
1 & $\asp{support(X,Z) \codeif support(X,Y),support(Y,Z).}$\\
2 & $\asp{supported(X)\codeif support(Y,X),in(Y).}$\\
3 & $\asp{valpref(X,Y)\codeif valpref(X,Z),valpref(Z,Y).}$\\
4 & $\asp{pref(X,Y)\codeif valpref(U,V),val(X,U),val(Y,V).}$\\
5 & $\asp{pref(X,Y)\codeif pref(X,Z),pref(Z,Y).}$\\
6 & $\asp{defeat(X,Y)\codeif att(Z,Y),support(X,Z).}$\\ 
7 & $\asp{defeat(X,Y)\codeif att(X,Z),support(Z,Y).}$\\
8 & $\asp{defeat(X,Y)\codeif att(X,Y),not\; pref(Y,X).}$\\
9 & $\asp{defeated(X)\codeif in(Y),defeat(Y,X).}$\\
10 & $\asp{not\_defended(X)\codeif defeat(Y,X), not\; defeated(Y).}$\\
\hline
\end{tabular}
\end{table} 

The hypothesis space $S_M$ of $LAS_{arg}$ is defined as to capture the search space of the rules that might encode possible argumentation semantics. We define the hypothesis space $S_M$ through \textit{mode declarations}, as shown in Listing~\ref{lst:hypothesisspace}.
\begin{table}
\setlength{\tabcolsep}{3pt}
\refstepcounter{table}\label{lst:hypothesisspace}
\begin{tabular}{rl}
\hline
\multicolumn{2}{l}{Listing 2: Hypothesis space $S_M$} \\
\hline
1 & $\asp{\hash modeh(in(var(arg))).}$\\
2 & $\asp{\hash modeh(out(var(arg))).}$\\
3 & $\asp{\hash modeb(in(var(arg))).}$\\
4 & $\asp{\hash modeb(out(var(arg))).}$\\
5 & $\asp{\hash modeb(arg(var(arg)), (positive)).}$\\ 
6 & $\asp{\hash modeb(att(var(arg), var(arg))).}$\\
7 & $\asp{\hash modeb(defeated(var(arg))).}$\\ 
8 & $\asp{\hash modeb(not\_defended(var(arg))).}$\\
9 & $\asp{\hash modeb(supported(var(arg)).}$\\
\hline
\end{tabular}
\end{table}

Since we look for definitions for the predicates $\asp{in}$ and $\asp{out}$, these predicates appear as arguments of a mode head declaration (Lines 1--2 in Listing~\ref{lst:hypothesisspace}). All the predicates of our unifying framework can appear in the body of the learned rules, and are therefore included as arguments of mode body declarations (Lines 3--8 in Listing~\ref{lst:hypothesisspace}). 

Given a $LAS_{arg}$ task $T$, a learned argumentation semantics is an optimal solution of $T$ according to Definition~\ref{def:contextdependentlearning}. Such solution can be computed by the ILASP system\footnote{We run ILASP with the command \texttt{ILASP --version=4 --learn-heuristics semantics.las}, where \texttt{semantics.las} is the filename with the positive and negative examples for a given semantics.}.

\subsection{Learning the Semantics of AAFs}
In this section, we illustrate in detail the case of learning the semantics of AAF. Positive and negative examples have inclusion sets that contain \emph{in} and \emph{out} labelings of the arguments, specified in the context, which are consistent with the given AAF semantics. As mentioned in the introduction, we manually engineered positive and negative examples to show that our method can learn specific known semantics when the training set is appropriate. For instance, the positive examples we crafted for the admissible semantics are: $\#pos(\{in(a)$, $in(b)$, $out(c)\}$, $\{out(a)$, $out(b)$, $in(c)\}$, $\{arg(a).$ $arg(b).$ $arg(c).\})$, $\#pos(\{in(a)$, $out(b)\}$, $\{out(a)$, $in(b)\}$, $\{arg(a).$ $arg(b).\})$, $\#pos(\{in(a)$, $out(b)\}$, $\{out(a)$, $in(b)\}$, $\{arg(a).$ $arg(b).$ $att(a,b).$ $att(b, a).\})$, $\#pos(\{in(a)$, $out(b)$, $in(c)\}$, $\{out(a)$, $in(b)$, $out(c)\}$, $\{arg(a).$ $arg(b).$ $arg(c).$ $att(a, b).$ $att(b, c).\})$, and $\#pos(\{in(a)$, $out(b)$, $in(c)$, $out(d)\}$, $\{out(a)$, $in(b)$, $out(c)$, $in(d)\}$, $\{arg(a).$ $arg(b).$ $arg(c).$ $arg(d).$ $att(a, b).$ $att(b,c).\})$ which instantiate argumentation frameworks and some of their admissible extensions. On the other hand, negative examples are $\#neg(\{in(a)$, $in(b)$, $out(c)$, $out(d)\}$, $\{\}$, $\{arg(a).$ $arg(b).$ $arg(c).$ $arg(d).$ $att(a,b).$ $att(b,a).$ $att(b,c).$ $att(a,c).$ $att(c,d).\})$, and $\#neg(\{out(a)$, $out(b)$, $in(c)$, $out(d)\}$, $\{\}$, $\{arg(a).$ $arg(b).$ $arg(c).$ $arg(d).$ $att(a,b).$ $att(b,a).$ $att(b,c).$ $att(a,c).$ $att(c,d).\})$, that encode extensions that are not admissible in the corresponding argumentation framework.

The background knowledge of the unified LAS task encoding can be simplified for AAF. These simplifications make it possible to obtain more compact encodings, thus improving their overall performance in both learning and inference. Consider the background knowledge $B$ given in Listing~\ref{lst:backgroundknowledge}. Since AAFs do not use the predicates $\asp{valpref}$, $\asp{val}$ and $\asp{support}$, examples of learning AAFs semantics will not include any fact that uses these predicates. Rules in lines 1--5 do not produce any instances of the predicates $\asp{supported}$ and $\asp{pref}$. Similarly, rules in lines 6 and 7 do not produce any instances of the predicate $\asp{defeat}$. Only the rule in line 8 may produce instances of $\asp{defeat}$. As there are no instances of the predicate $\asp{pref}$, the rule in line 8 is equivalent to $\asp{defeat(Y,X)\codeif att(Y,X)}$. From this simplified rule, if we have $\asp{att(Y,X)}$, then we also have $\asp{defeat(Y,X)}$, and this is the only definition of the predicate $\asp{defeat}$ that can produce it. This means that we can rewrite the rule in line 9 as $\asp{defeated(X)\codeif in(Y), att(Y,X)}$ and the rule in line 10 as
$\asp{not\_defended(X)\codeif att(Y,X),not\; defeated(Y)}$. The definition of $\asp{defeat}$ (rule in line 8) can be dropped since this predicate does not occur in the body of any rule. Thus, in the case of AAF, the simplified background knowledge will include only two  rules, instead of the $10$ rules included in Listing~\ref{lst:backgroundknowledge}. We refer to the simplified background knowledge for AAF as $B_{AAF}$, and show it in Listing~\ref{lst:simplifiedbackground}.

\begin{table}[h]
\setlength{\tabcolsep}{3pt}
\refstepcounter{table}\label{lst:simplifiedbackground}
\begin{tabular}{rl}
\hline
\multicolumn{2}{l}{Listing 3: Simplified background knowledge $B_{AAF}$} \\
\hline
1 & $\asp{defeated(X)\codeif in(Y), att(Y,X).}$\\
2 & $\asp{not\_defended(X)\codeif att(Y,X), not\; defeated(Y).}$\\
\hline
\end{tabular}
\end{table}

From sets of positive and negative examples that are consistent with AAF semantics and using the simplified background knowledge $B_{AAF}$, our $LAS_{arg}$ task accepts as optimal solutions the answer set programs given in listings~\ref{lst:stablesemantics}--\ref{lst:preferredemantics} for the stable, the complete, the admissible, the grounded and the preferred semantics, respectively. These learned programs, together with the simplified background knowledge $B_{AAF}$, provide the full learned AAF semantics. Note that the same optimal hypotheses would be learned if the initial full background knowledge $B$ was used instead.

\begin{table}[h]
\setlength{\tabcolsep}{3pt}
\refstepcounter{table}\label{lst:stablesemantics}
\begin{tabular}{rl}
\hline
\multicolumn{2}{l}{Listing 4: Stable semantics} \\
\hline
1 & $\asp{out(X)\codeif defeated(X).}$\\
2 & $\asp{in(X)\codeif arg(X), not\; out(X).}$\\
\hline
\end{tabular}
\end{table}

\begin{table}[h]
\setlength{\tabcolsep}{3pt}
\refstepcounter{table}\label{lst:completesemantics}
\begin{tabular}{rl}
\hline
\multicolumn{2}{l}{Listing 5: Complete semantics} \\
\hline
1 & $\asp{out(X)\codeif not\_defended(X).}$\\
2 & $\asp{in(X)\codeif arg(X), not\; out(X), not\; defeated(X).}$\\
\hline
\end{tabular}
\end{table}

\begin{table}[h]
\setlength{\tabcolsep}{3pt}
\refstepcounter{table}\label{lst:admissiblesemantics}
\begin{tabular}{rl}
\hline
\multicolumn{2}{l}{Listing 6: Admissible semantics} \\
\hline
1 & $\asp{out(X)\codeif defeated(X).}$\\
2 & $\asp{out(X)\codeif arg(X), not\; in(X).}$\\
3 & $\asp{in(X)\codeif arg(X), not\; out(X), not\; not\_defended(X).}$\\
\hline
\end{tabular}
\end{table}

\begin{table}[h]
\setlength{\tabcolsep}{3pt}
\refstepcounter{table}\label{lst:groundedsemantics}
\begin{tabular}{rl}
\hline
\multicolumn{2}{l}{Listing 7: Grounded semantics} \\
\hline
1 & $\asp{in(X)\codeif arg(X), not\; not\_defended(X).}$\\
2 & $\asp{out(X)\codeif not\_defended(X).}$\\
3 & $\asp{\hash heuristic\; in(X).\; [1@1, false]}$\\
\hline
\end{tabular}
\end{table}

\begin{table}[h]
\setlength{\tabcolsep}{3pt}
\refstepcounter{table}\label{lst:preferredemantics}
\begin{tabular}{rl}
\hline
\multicolumn{2}{l}{Listing 8: Preferred semantics} \\
\hline
1 & $\asp{in(X)\codeif arg(X), not\; defeated(X)}$, \\
& \hspace{11.5mm} $\asp{not\; not\_defended(X).}$\\
2 & $\asp{out(X)\codeif not\_defended(X).}$\\
3 & $\asp{\hash heuristic\; out(X).\; [1@1, false]}$\\
\hline
\end{tabular}
\end{table}

We have used a similar approach to learn the semantics of BAF and VAF. The background knowledge of our unifying framework can be similarly simplified by obtaining the $B_{BAF}$ and $B_{VAF}$ programs given in  Listings~\ref{lst:simplifiedbaf} and~\ref{lst:simplifiedvaf}, respectively.

Interestingly, the learned solutions for BAF and VAF are the same as those of AAF. By adding these learned solutions to the simplified background knowledge, $B_{BAF}$ and $B_{VAF}$, we get the corresponding full learned semantics for BAF and VAF, respectively.  

\subsubsection{Equivalence with ASPARTIX}
The following theorem proves the equivalence between our learned admissible, complete, and stable AAF semantics and the corresponding ASPARTIX encodings, showing the soundness of our unifying learning framework with respect to AAF.

\begin{theorem}
    Let $F$ be an ASP representation of an AAF. Let $T_{\sigma}=\langle B_{AAF},S_M,\langle E^+,E^-\rangle \rangle$ be a learning task for the $\sigma$-semantics of AAF, where $\sigma$ stands for admissible, complete or stable. Let $H_{\sigma}$ be a solution to $T_{\sigma}$, and  $P_{\sigma}=B_{AAF} \cup H_{\sigma}$ be the full learned $\sigma$-semantics. Let $S_{\sigma}$ be the ASP encoding of the $\sigma$-semantics in ASPARTIX. Then, $AS(P_{\sigma}\cup F)=AS(S_{\sigma}\cup F).$
\end{theorem}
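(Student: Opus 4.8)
The plan is to prove, for each $\sigma \in \{\text{admissible}, \text{complete}, \text{stable}\}$, that $P_\sigma \cup F$ and $S_\sigma \cup F$ have \emph{exactly} the same answer sets, by establishing that each side is in canonical bijection with the $\sigma$-extensions of the AAF encoded by $F$ and that, on the shared vocabulary $\{\mathtt{arg},\mathtt{att},\mathtt{in},\mathtt{out},\mathtt{defeated},\mathtt{not\_defended}\}$, the two encodings derive identical atoms. First I would fix $F = \{\asp{arg(a)} : a \in Arg\} \cup \{\asp{att(a,b)} : (a,b) \in att\}$ and write the ASPARTIX programs $S_\sigma$ explicitly next to the learned programs $P_\sigma = B_{AAF} \cup H_\sigma$ from Listings~\ref{lst:completesemantics}--\ref{lst:admissiblesemantics} and Listing~\ref{lst:stablesemantics}, observing that ASPARTIX imposes conflict-freeness and defence through hard constraints built on the very same auxiliary predicate $\mathtt{defeated}$ and an undefendedness predicate that coincides with $\mathtt{not\_defended}$. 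I would also remark at the outset that none of these three learned programs contains a heuristic statement, so the relevant operator is plain $AS(\cdot)$ and no subset-minimality (i.e.\ $AS^*$) reasoning is needed.

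Next I would invoke the splitting set theorem with $F$ as the bottom stratum, so that the facts of $F$ occur verbatim in every answer set of either program. The crucial structural observation is that, once the in-part $I = \{a : \asp{in(a)} \in M\}$ of a candidate model $M$ is fixed, the extensions of $\mathtt{defeated}$ and $\mathtt{not\_defended}$ are \emph{uniquely determined by $I$}: $\asp{defeated}$ depends only (positively) on $\mathtt{in}$ through rule~1 of $B_{AAF}$, and $\asp{not\_defended}$ depends only on $\mathtt{defeated}$ and $\mathtt{att}$ through rule~2, so neither recurses through $\mathtt{out}$. These are precisely the atoms ASPARTIX derives from the same $I$. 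Hence the derived layer agrees on both sides by construction, and it suffices to analyse the $\mathtt{in}/\mathtt{out}$ guessing layer.

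For the core correspondence I would, for each $\sigma$, take an arbitrary total $\mathtt{in}/\mathtt{out}$ labelling $M$ extending $F$ together with its canonically derived auxiliary atoms, form the reduct $(P_\sigma \cup F)^M$, and verify that $M$ is its minimal model iff $I$ is a $\sigma$-extension. Concretely, conflict-freeness is forced because an $\mathtt{att}$-edge inside $I$ would derive $\mathtt{defeated}$ and hence $\mathtt{out}$ on an in-argument, contradicting the mutual exclusion enforced by the $\naf\,\mathtt{out}$ guard in the in-rule; self-defence is captured by the $\naf\,\mathtt{not\_defended}$ guard; for complete the second $\mathtt{out}$-rule together with minimality forces every defended argument to be $\mathtt{in}$; and for stable the $\mathtt{out}(X)\codeif\mathtt{defeated}(X)$ rule plus the in-rule forces every $\mathtt{out}$ argument to be defeated. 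I would then run the identical check for $S_\sigma$, where the same three conditions are imposed by ASPARTIX's explicit constraints, obtaining the same family of admissible labellings. Since both encodings are satisfied by precisely the canonical models indexed by the $\sigma$-extensions, and those models carry identical auxiliary atoms, the two answer-set collections coincide, yielding $AS(P_\sigma \cup F) = AS(S_\sigma \cup F)$.

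The main obstacle I anticipate is not conflict-freeness or defence, which are local, but the interplay between the negation-cycle among $\mathtt{in}$, $\mathtt{out}$ and $\mathtt{defeated}$ and the \emph{minimality} condition on answer sets: I must argue via the reduct that no spurious model arises in which some argument is labelled neither $\mathtt{in}$ nor $\mathtt{out}$, or in which $\mathtt{out}$ holds without a supporting derivation, and that the learned program's folding of conflict-freeness and defence into the $\mathtt{in}/\mathtt{out}$ rules introduces no stable models beyond those carved out by ASPARTIX's hard constraints (and vice versa). A secondary but genuine bookkeeping point is justifying that the equality is \emph{literal} rather than merely up to projection onto $\{\mathtt{in},\mathtt{out}\}$; this rests on $B_{AAF}$ and the ASPARTIX encoding defining $\mathtt{defeated}$ (and undefendedness) by the same rule bodies, so that the derived atoms match exactly, and if ASPARTIX's naming differs the statement should be read modulo the evident renaming onto the shared signature.
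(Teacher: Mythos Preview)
Your proposal is correct but follows a genuinely different route from the paper. The paper argues by direct mutual reduction: assuming $A\in AS(S_{adm}\cup F)$, it writes out the reduct $(P_{adm}\cup F)^A$ explicitly and simplifies each ground rule scheme one at a time---using three small ad-hoc lemmas (e.g.\ $\asp{out(a)}\in A$ iff $\asp{out(a)}\in A$ or $\asp{defeated(a)}\in A$) that exploit the constraints of $S_{adm}$---until every rule collapses to a fact already in $A$, whence $A$ is the minimal model of the reduct; then it repeats the exercise in the other direction. No reference to admissible extensions as such is made. You instead triangulate through the argumentation-theoretic notion: characterise the answer sets of $P_\sigma\cup F$ as precisely the canonical models indexed by $\sigma$-extensions, observe (or cite) that the same holds for $S_\sigma\cup F$, and conclude equality because the auxiliary layer ($\asp{defeated}$, $\asp{not\_defended}$) is defined identically on both sides. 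Your route is more modular and uniform across the three semantics, and it explains \emph{why} the encodings agree (each is sound and complete for $\sigma$); the paper's route is more self-contained, needing no external correctness statement for ASPARTIX, at the cost of bespoke rule-by-rule bookkeeping. Both approaches confront the same technical crux you correctly flag---the even negation loop among $\asp{in}$, $\asp{out}$, $\asp{defeated}$ in the learned program and the attendant minimality argument---which the paper discharges via its Lemmas and you would discharge inside your reduct analysis.
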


\begin{proof}
We prove the case where $\sigma$ is the admissible semantics\footnote{The other cases can be proved in a similar fashion, and are included in the fuller report available at \href{https://github.com/dasaro/ArgLAS}{\texttt{https://github.com/dasaro/ArgLAS}}.}. This amounts to showing that, given the encoding of an AAF as a set of \asp{arg} and \asp{att} facts $F$, $A$ is an answer set of $S_{adm}\cup F$ if and only if $A$ is an answer set of $P_{adm}\cup F.$ The program $S_{adm}$, i.e., the ASPARTIX encoding of the admissible semantics, is
\begin{mdframed}[backgroundcolor=light-gray, roundcorner=10pt,leftmargin=0.1, rightmargin=0.1, innerleftmargin=5, innertopmargin=1,innerbottommargin=1, outerlinewidth=1, linecolor=light-gray]
\begin{lstlisting}[mathescape]
in(X) :- not out(X), arg(X).
out(X) :- not in(X), arg(X).
defeated(X) :- in(Y), att(Y,X).
not_defended(X) :- att(Y,X),
    not defeated(Y).
$\bot$ :- in(X), in(Y), att(X,Y).
$\bot$ :- in(X), not_defended(X).
\end{lstlisting}
\end{mdframed}
and we remind the reader that the program $P_{adm}$, i.e., our learned encoding is:
\begin{mdframed}[backgroundcolor=light-gray, roundcorner=10pt,leftmargin=0.1, rightmargin=0.1, innerleftmargin=5, innertopmargin=1,innerbottommargin=1, outerlinewidth=1, linecolor=light-gray]
\begin{lstlisting}[mathescape]
out(X) :- defeated(X).
out(X) :- arg(X), not in(X).
in(X) :- arg(X), not out(X),
    not not_defended(X).
defeated(X) :- in(Y), att(Y,X).
not_defended(X) :- att(Y,X),
    not defeated(Y).
\end{lstlisting}
\end{mdframed}

    ($\Rightarrow$): We first assume that $A$ is an answer set of $S_{adm}\cup F$, and show that $A$ is also an answer set of $P_{adm}\cup F.$ Again for simplicity, as there are no function symbols, we can assume that $ground(P_{adm})$ returns all the ground instances of $P_{adm}$. The reduct $(ground(P_{adm}\cup F))^A$ must then contain:

\begin{mdframed}[backgroundcolor=light-gray, roundcorner=10pt,leftmargin=0.1, rightmargin=0.1, innerleftmargin=5, innertopmargin=1,innerbottommargin=1, outerlinewidth=1, linecolor=light-gray]
\begin{lstlisting}[mathescape]
(1) F.
(2) out(a) :- defeated(a), for each constant a.
(3) out(a) :- arg(a), for each constant a such that in(a)$\not\in A$.
(4) in(a) :- arg(a), for each constant a such that out(a)$\not\in A$ and not_defended(a)$\not\in A$.
(5) defeated(a) :- att(b,a), in(b), for each pair of constants a and b.
(6) not_defended(a) :- att(b,a), for each pair of constants a and b, such that defeated(b)$\not\in A$.
\end{lstlisting}
\end{mdframed} 

We see that \asp{(2)} can be rewritten as:
 
\begin{mdframed}[backgroundcolor=light-gray, roundcorner=10pt,leftmargin=0.1, rightmargin=0.1, innerleftmargin=5, innertopmargin=1,innerbottommargin=1, outerlinewidth=1, linecolor=light-gray]
\begin{lstlisting}[mathescape]
(2) out(a), for each constant a such that defeated(a)$\in A$.
\end{lstlisting}
\end{mdframed} 

Since $A$ is an answer set of $S_{adm}\cup F$, the first two rules in $S_{adm}$ imply that for each constant \asp{a}, \asp{out(a)}$\in A$ iff \asp{in(a)}$\not\in A$ and \asp{in(a)}$\in A$ iff \asp{out(a)}$\not\in A$. This allows us to rewrite rule \asp{(3)} as:
 
\begin{mdframed}[backgroundcolor=light-gray, roundcorner=10pt,leftmargin=0.1, rightmargin=0.1, innerleftmargin=5, innertopmargin=1,innerbottommargin=1, outerlinewidth=1, linecolor=light-gray]
\begin{lstlisting}[mathescape]
(3) out(a), for each constant a such that out(a)$\in A$. 
\end{lstlisting}
\end{mdframed} 

We can now combine \asp{(2)} and \asp{(3)} to get:
 
\begin{mdframed}[backgroundcolor=light-gray, roundcorner=10pt,leftmargin=0.1, rightmargin=0.1, innerleftmargin=5, innertopmargin=1,innerbottommargin=1, outerlinewidth=1, linecolor=light-gray]
\begin{lstlisting}[mathescape]
(2-3) out(a), for each constant a such 
      that out(a)$\in A$ or defeated(a)$\in A$. 
\end{lstlisting}
\end{mdframed} 
 
To simplify this rule further, we will make use of the following Lemma.
\begin{lemma}\label{lem:4.1.3}
Given $A$ an answer set of $S_{adm}\cup F$ and \asp{a} an argument constant, \asp{out(a)}$\in A$ iff \asp{out(a)}$\in A$ or \asp{defeated(a)}$\in A$.
\end{lemma}
\begin{innerproof}[Proof of Lemma]
    The proof from left to right follows from the observation that if we have \asp{out(a)}$\in A$, then \asp{out(a)}$\in A$ or \asp{defeated(a)}$\in A.$
    
    For the other direction, we have to separately show that if \asp{out(a)}$\in A$, then \asp{out(a)}$\in A$ and that if \asp{defeated(a)}$\in A$, then \asp{out(a)}$\in A$. Clearly, if \asp{out(a)}$\in A$, then \asp{out(a)}$\in A$. Let's assume \asp{defeated(a)}$\in A$. The first constraint in $S_{adm}$ is equivalent to \asp{:- in(Y), defeated(Y)}, where the definition of the \asp{defeated} predicate is used. Then having \asp{defeated(a)}$\in A$ means that \asp{in(a)} is not in $A$ and by the second rule in $S_{adm}$, \asp{out(a)}$\in A$. Hence, \asp{out(a)}$\in A$ iff \asp{out(a)}$\in A$ or \asp{defeated(a)}$\in A$.\qed 
\end{innerproof}

Applying Lemma \ref{lem:4.1.3}, the rule \asp{(2-3)} can be rewritten as:
\begin{mdframed}[backgroundcolor=light-gray, roundcorner=10pt,leftmargin=0.1, rightmargin=0.1, innerleftmargin=5, innertopmargin=1,innerbottommargin=1, outerlinewidth=1, linecolor=light-gray]
\begin{lstlisting}[mathescape]
(2-3) out(a), for each constant a such that out(a)$\in A$.
\end{lstlisting}
\end{mdframed} 

We can also transform \asp{(4)} to:
 
\begin{mdframed}[backgroundcolor=light-gray, roundcorner=10pt,leftmargin=0.1, rightmargin=0.1, innerleftmargin=5, innertopmargin=1,innerbottommargin=1, outerlinewidth=1, linecolor=light-gray]
\begin{lstlisting}[mathescape]
(4) in(a), for each constant a such 
    that in(a)$\in A$ and not_defended(a)$\not\in A$.
\end{lstlisting}
\end{mdframed} 
 
We again proceed with proving a Lemma to transform this rule further.
\begin{lemma}\label{lem:4.1.4}
Given that $A$ is an answer set of $S_{adm}\cup F$ and \asp{a} is an argument constant, \asp{in(a)}$\in A$ iff \asp{in(a)}$\in A$ and \asp{not\_defended(a)}$\not\in A$.
\end{lemma}
\begin{innerproof}[Proof of Lemma]
    The proof from right to left follows from the observation that if we have \asp{in(a)}$\in A$ and \asp{not\_defended(a)}$\not\in A$, then, in particular, \asp{in(a)}$\in A$.
    
    For the other direction, we are left to show that if \asp{in(a)}$\in A$, then we also have \asp{not\_defended(a)}$\not\in A$. For this we can use the last constraint in $S_{adm}$: if \asp{in(a)}$\in A$, then \asp{not\_defended(a)}$\not\in A$. It follows that if \asp{in(a)}$\in A$, then \asp{in(a)}$\in A$ and \asp{not\_defended(a)}$\not\in A$.\qed 
\end{innerproof}

Applying Lemma \ref{lem:4.1.4}, rule \asp{(4)} is equivalent to:
 
\begin{mdframed}[backgroundcolor=light-gray, roundcorner=10pt,leftmargin=0.1, rightmargin=0.1, innerleftmargin=5, innertopmargin=1,innerbottommargin=1, outerlinewidth=1, linecolor=light-gray]
\begin{lstlisting}[mathescape]
(4) in(a), for each constant a such that in(a)$\in A$.
\end{lstlisting}
\end{mdframed} 
 
Since everything in the body has been proven to be the same in the definitions of \asp{defeated} in both $S_{adm}$ and $P_{adm}$, rule \asp{(5)} can be simplified to:
 
\begin{mdframed}[backgroundcolor=light-gray, roundcorner=10pt,leftmargin=0.1, rightmargin=0.1, innerleftmargin=5, innertopmargin=1,innerbottommargin=1, outerlinewidth=1, linecolor=light-gray]
\begin{lstlisting}[mathescape]
(5) defeated(a), for each constant a such that defeated(a)$\in A$.
\end{lstlisting}
\end{mdframed}

This also means that everything in the body of rule \asp{(6)} is the same in both cases, so \asp{(6)} becomes
 
\begin{mdframed}[backgroundcolor=light-gray, roundcorner=10pt,leftmargin=0.1, rightmargin=0.1, innerleftmargin=5, innertopmargin=1,innerbottommargin=1, outerlinewidth=1, linecolor=light-gray]
\begin{lstlisting}[mathescape]
(6) not_defended(a), for each constant a 
    such that not_defended(a)$\in A.$
\end{lstlisting}
\end{mdframed}

We have now simplified each rule so that we can conclude that all ground instances of the predicates \asp{arg}, \asp{att}, \asp{in}, \asp{out}, \asp{defeated} and \asp{not\_defended} are the same in the reduct and in $A$. Hence, the minimal model of the reduct must be equal to $A$, meaning that $A$ is an answer set of $P_{adm}\cup F.$\\

($\Leftarrow$): Now we are left with showing that if $A$ is an answer set of $P_{adm} \cup F$, it must also be an answer set of $S_{adm} \cup F$. For the purpose of the proof, let's assume that $A$ is an answer set of $P_{adm} \cup F$. The reduct of $ground((S_{adm} \cup F)^A$ contains the following:
 
\begin{mdframed}[backgroundcolor=light-gray, roundcorner=10pt,leftmargin=0.1, rightmargin=0.1, innerleftmargin=5, innertopmargin=1,innerbottommargin=1, outerlinewidth=1, linecolor=light-gray]
\begin{lstlisting}[mathescape]
(1) F.
(2) in(a), for each constant a such that out(a)$\not\in A$.
(3) out(a), for each constant a such that in(a)$\not\in A$.
(4) defeated(a) :- in(b), att(b,a), for each pair 
    of constants a and b.
(5) not_defended(a) :- att(b,a), for each pair of
    constants a and b, such that defeated(b)$\not\in A$.
(6) $\bot$ :- in(a), in(b), att(a,b), for each 
    pair of constants a and b.
(7) $\bot$ :- in(a), not_defended(a), for each constant a.
\end{lstlisting}
\end{mdframed}
where for simplifying \asp{(2)} and \asp{(3)} we used the fact that each constant \asp{a} has \asp{arg(a)} in $F$.

We have to show that the constraints \asp{(6)} and \asp{(7)} are satisfied by $A$, which means that they should not produce $\bot$ ($\bot$ cannot be a part of any model). For \asp{(6)}, first note that it is equivalent to:
 
\begin{mdframed}[backgroundcolor=light-gray, roundcorner=10pt,leftmargin=0.1, rightmargin=0.1, innerleftmargin=5, innertopmargin=1,innerbottommargin=1, outerlinewidth=1, linecolor=light-gray]
\begin{lstlisting}[mathescape]
(6) $\bot$ :- in(b), defeated(b).
\end{lstlisting}
\end{mdframed}
where we make use of the definition of the \asp{defeated} predicate. For this constraint to be violated, there must be a constant \asp{b} such that \asp{in(b)}$\in A$ and \asp{defeated(b)}$\in A$. But if \asp{defeated(b)}$\in A$, then by the first rule in $P_{adm}$, \asp{out(b)} must also be in $A$, meaning that \asp{in(b)} could not be in $A$ by the third rule in $P_{adm}$. Hence, the constraint cannot be violated.

Rule \asp{(7)} cannot produce $\bot$ using the ground instances of $A$, as \asp{in(X)} is defined to only be true if \asp{not\_defended(X)} is false (by the third rule in $P_{adm}$).

To simplify rule \asp{(2)} we prove the following Lemma.
\begin{lemma}\label{lem:4.1.5}
Given that $A$ is an answer set of $P_{adm}\cup F$ and $a$ is an argument constant, \asp{out(a)}$\not\in A$ iff \asp{in(a)} $\in A$.
\end{lemma}
\begin{innerproof}[Proof of Lemma]
    For the left to right direction, notice that if \asp{in(a)}$\not\in A$ the second rule of $P_{adm}$ will produce \asp{out(a)}. This means that if \asp{out(a)}$\not\in A$, then \asp{in(a)}$\in A.$
    
    For the right to left direction, we use the third rule in $P_{adm}$: if we have \asp{in(a)}$\in A$, then \asp{out(a)} should not be in $A$.\qed
\end{innerproof}

Using Lemma \ref{lem:4.1.5}, rule \asp{(2)} is transformed to
 
\begin{mdframed}[backgroundcolor=light-gray, roundcorner=10pt,leftmargin=0.1, rightmargin=0.1, innerleftmargin=5, innertopmargin=1,innerbottommargin=1, outerlinewidth=1, linecolor=light-gray]
\begin{lstlisting}[mathescape]
(2) in(a), for each constant a such that in(a)$\in A$.
\end{lstlisting}
\end{mdframed} 

Also, by the second rule in $P_{adm}$, if \asp{in(X)}$\not\in A$, then \asp{out(X)}$\in A$, so rule \asp{(3)} becomes:
 
\begin{mdframed}[backgroundcolor=light-gray, roundcorner=10pt,leftmargin=0.1, rightmargin=0.1, innerleftmargin=5, innertopmargin=1,innerbottommargin=1, outerlinewidth=1, linecolor=light-gray]
\begin{lstlisting}[mathescape]
(3) out(a), for each constant a such that out(a)$\in A.$
\end{lstlisting}
\end{mdframed} 

Every body condition in the definition of the \asp{defeated} predicate is the same for both $S_{adm}$ and $P_{adm}$, so \asp{(4)} can be rewritten as:
 
\begin{mdframed}[backgroundcolor=light-gray, roundcorner=10pt,leftmargin=0.1, rightmargin=0.1, innerleftmargin=5, innertopmargin=1,innerbottommargin=1, outerlinewidth=1, linecolor=light-gray]
\begin{lstlisting}[mathescape]
(4) defeated(a), for defeated(a)$\in A.$
\end{lstlisting}
\end{mdframed} 

Everything in the body of rule \asp{(5)} (the definition of the \asp{not\_ defended} predicate) is now the same in both cases so \asp{(5)} can be simplified to:
 
\begin{mdframed}[backgroundcolor=light-gray, roundcorner=10pt,leftmargin=0.1, rightmargin=0.1, innerleftmargin=5, innertopmargin=1,innerbottommargin=1, outerlinewidth=1, linecolor=light-gray]
\begin{lstlisting}[mathescape]
(5) not_defended(a), for not_defended(a)$\in A.$
\end{lstlisting}
\end{mdframed} 
 
We have simplified each rule and are now able conclude that all ground instances of the predicates \asp{arg}, \asp{att}, \asp{in}, \asp{out}, \asp{defeated} and \asp{not\_defended} are the same in the reduct and in $A$. Moreover, the two constraints \asp{(6)} and \asp{(7)} never produce $\bot$, which means that the minimal model of the reduct must be equal to $A$, and so $A$ is an answer set of $S_{adm}\cup F.$

Given the proofs of ($\Rightarrow$) and ($\Leftarrow$), $A$ is an answer set of $S_{adm}\cup F$ iff $A$ is an answer set of $P_{adm}\cup F.$
\end{proof}

\begin{table}[h]
\setlength{\tabcolsep}{3pt}
\refstepcounter{table}\label{lst:simplifiedbaf}
\begin{tabular}{rl}
\hline
\multicolumn{2}{l}{Listing 9: Simplified background knowledge $B_{BAF}$} \\
\hline
1 & $\asp{support(X,Z)\codeif support(X,Y),support(Y,Z).}$\\
2 & $\asp{supported(X)\codeif support(Y,X), in(Y).}$\\
3 & $\asp{defeat(X,Y)\codeif att(Z,Y), support(X,Z).}$\\
4 & $\asp{defeat(X,Y)\codeif att(X,Z), support(Z,Y).}$\\
5 & $\asp{defeat(X,Y)\codeif att(X,Y).}$\\
6 & $\asp{defeated(X)\codeif in(Y), defeat(Y,X).}$\\ 
7 & $\asp{not\_defended(X)\codeif defeat(Y,X), not\; defeated(Y).}$\\
\hline
\end{tabular}
\end{table}

\begin{table}[h]
\setlength{\tabcolsep}{3pt}
\refstepcounter{table}\label{lst:simplifiedvaf}
\begin{tabular}{rl}
\hline
\multicolumn{2}{l}{Listing 10: Simplified background knowledge $B_{VAF}$} \\
\hline
1 & $\asp{valpref(X,Y)\codeif valpref(X,Z),valpref(Z,Y).}$\\
2 & $\asp{pref(X,Y)\codeif valpref(U,V),val(X,U),val(Y,V).}$\\
3 & $\asp{pref(X,Y)\codeif pref(X,Z),pref(Z,Y).}$\\
4 & $\asp{defeat(X,Y)\codeif att(X,Y),not\; pref(Y,X).}$\\
5 & $\asp{defeated(X)\codeif in(Y),defeat(Y,X).}$\\
6 & $\asp{not\_defended(X)\codeif defeat(Y,X), not\; defeated(Y).}$\\
\hline
\end{tabular}
\end{table} 

\subsection{Applying $LAS_{arg}$ to Flat ABA Frameworks}
In this section, we show how our unified framework $LAS_{arg}$ can also be applied to ABA. Recall that ABA involves structured arguments. The definition of ABA includes rules, assumptions and a contrary relation. To use our unified $LAS_{arg}$ task, we have developed a 2-step algorithm for translating a flat ABA framework into an AAF. The first step constructs arguments from assumptions and rules. The second step uses the contrary relation to compute the attack relations. The unified $LAS_{arg}$ task can then be applied to the resulting AAF.

\subsubsection{Step 1: Constructing Arguments.} We write an answer set program that encodes rules and assumptions of a given ABA framework and produces the arguments of the framework as answer sets. Each argument has one root and one or more assumptions. To represent assumptions, we use the predicate $\asp{as(X)}$. To represent rules, we use the predicate $\asp{holds(X)}$. For example, we encode the rule $r \gets s,t$ as $\asp{holds(r):-holds(s),holds(t)}$. We also use $\asp{root(X)}$ to express that $X$ is the root of the argument, and $\asp{assume(X)}$ to express that $X$ is an assumption in the argument. 

Consider, for example, the ABA framework defined by $\mathcal{L}=\{ p,q,r,s,t \}$, $\mathcal{R} = \{ r \leftarrow s,t,\,\,s \leftarrow p,\,\,t \leftarrow q \}$, $\mathcal{A}=\{p,q\}$, $\bar{p}=t$ and $\bar{q}=r$. This translates to lines 1--5 in Listing~\ref{lst:constructarguments}. Lines 6--10 are auxiliary definitions required for the learning task.
\begin{table}[h]
\setlength{\tabcolsep}{3pt}
\refstepcounter{table}\label{lst:constructarguments}
\begin{tabular}{rl}
\hline
\multicolumn{2}{l}{Listing 11: Step 1 - Construct Arguments} \\
\hline
1 & $\asp{as(p).}$\\
2 & $\asp{as(q).}$\\
3 & $\asp{holds(r)\codeif holds(s), holds(t).}$\\
4 & $\asp{holds(s)\codeif holds(p).}$\\
5 & $\asp{holds(t)\codeif holds(q).}$\\
6 & $\asp{0\{assume(X)\}1\codeif as(X).}$\\
7 & $\asp{holds(X)\codeif assume(X).}$\\
8 & $\asp{1\{root(X): holds(X)\}1.}$\\
9 & $\asp{\hash heuristic\; assume(X).\;  [1, false]}$\\
10 & $\asp{\hash heuristic\; root(X).\;  [1, true]}$\\
11 & $\asp{\hash show\; root/1.}$\\
12 & $\asp{\hash show\; assume/1.}$\\
\hline
\end{tabular}
\end{table}
For each assumption, we decide whether to assume it (Line 6 in Listing 11) and if we assume it, then it holds (Line 7 in Listing 11). Moreover, we have exactly one root, but for something to be a root, it has to hold (Line 8 in Listing 11). Heuristics statements in Lines 9 and 10 are used to favor answer sets that set the atom $\asp{assume(X)}$ to false and $\asp{root(X)}$ to true. During the search for answer sets, the solver sets one of the $\asp{assume}$ or $\asp{root}$ atoms to the desired truth value (false for $\asp{assume}$ and true for $\asp{root}$). By running the solver with the enumeration mode flag (\texttt{--enum=domrec}), whenever an answer set is found, the solver adds constraints stating that any further answer sets must be ``better'' in at least one way -- either by making one of the $\asp{assume}$ atoms that is true in the previous answer set false, or by making one of the $\asp{root}$ atoms that is false in the previous answer set true. As there is exactly one $\asp{root}$ atom per answer set, all answer sets that are minimal over the $\asp{assume}$ atom for each root are computed. This is exactly what we are looking for, since to construct the arguments, we want to find one proof for each root (starting from assumptions), that involves a minimal number of assumptions.

Each answer set of the program in Listing~\ref{lst:constructarguments} describes an argument with the predicates $\asp{root(X)}$ and $\asp{assume(X)}$, which give the root and the assumptions in the argument. When we solve this program with clingo\footnote{We run clingo with support for heuristics with the command \texttt{clingo -n 1 constr\_args.lp --heuristic=domain --enum=domrec}, where \texttt{constr\_args.lp} is the program in Listing~\ref{lst:constructarguments}.}, we obtain the following answer sets: $\asp{\{assume(p),root(p)\}, \{assume(q),}$ $\asp{root(q)\}, \{assume(p),root(s)\}, \{assume(q),root(t)\}}$, \\
$\asp{\{assume(p),assume(q),root(r)\}}$.

\subsection{Step 2: Finding the Attack Relations}
After completing Step 1, we assign an index to each argument, which makes them easier to encode. We follow the output ordering of the answer sets in clingo. In the example above, this process results in the following assignments: $1$ to $\asp{\{assume(p),root(p)\}}$, $2$ to $\asp{\{assume(q),root(q)\}}$, $3$ to $\asp{\{assume(p),root(s)\}}$, $4$ to $\asp{\{assume(q),root(t)\}}$ and $5$ to $\asp{\{assume(p),assume(q),root(r)\}}$. Let $\asp{root(N, X)}$ denote that $X$ is a root in the argument with index $N$, and let $\asp{as(N, X)}$ denote that $X$ is an assumption in the same argument. Let $\asp{contr(P,Q)}$ express that $P$ is the contrary of $Q$ in the ABA framework. Thus, the framework in the example above translates to lines 1--11 of Listing~\ref{lst:generateattacks}.
\begin{table}[h]
\setlength{\tabcolsep}{3pt}
\refstepcounter{table}\label{lst:generateattacks}
\begin{tabular}{rl}
\hline
\multicolumn{2}{l}{Listing 12: Step 2 - Generate Attacks} \\
\hline
1 & $\asp{root(3,s).}$\\
2 & $\asp{root(4,t).}$\\
3 & $\asp{root(5,r).}$\\
4 & $\asp{as(1,p).}$\\
5 & $\asp{as(2,q).}$\\
6 & $\asp{as(3,p)}$\\
7 & $\asp{as(4,q).}$\\
8 & $\asp{as(5,p).}$\\
9 & $\asp{as(5,q).}$\\
10 & $\asp{contr(p,t).}$\\
11 & $\asp{contr(q,r).}$\\
12 & $\asp{att(X,Y)\codeif contr(P,Q),root(X,Q),as(Y,P).}$\\
13 & $\asp{\hash show\; att/2.}$\\
\hline
\end{tabular}
\end{table}

Finally, to find the attack relations, we must introduce one last rule (line 12). This rule states that if an assumption is contrary to a root, then there is an attack from the argument containing the assumption towards the argument containing the root. By solving the program in Listing~\ref{lst:generateattacks} with clingo, we obtain one answer set: $\asp{\{att(4,1), att(4,3), att(4,5), att(5,2), att(5,4),}$ $\asp{att(5,5)\}}$. We now have an AAF representation of the original ABA framework. At this point we can apply the unified LAS task to learn the semantics of the ABA framework.

\section{Evaluation}
In this section, we evaluate the performance of our method by comparing it with ASPARTIX and a Deep Learning technique for learning argumentation semantics.
Comparison with ASPARTIX mainly concerns time required to compute extensions. When comparing to the Deep Learning approach we instead focus mainly on the dataset size needed to learn the AAF semantics. Note that, given that we learned semantics from small datasets, the time required for the training phase is constant and very small ($<10$ seconds for all considered semantics on our architecture); therefore, we are not going to discuss this in deeper detail. We ran the experiment on a MacBook Pro M1 2020 with 16GB of RAM. For reproducibility of results, the code used for benchmarking is available at \href{https://github.com/dasaro/ArgLAS}{\texttt{https://github.com/dasaro/ArgLAS}}.

\subsection{Time Performance for Computing AAF Extensions}
To evaluate time performance, we used the benchmark dataset from the ICCMA-23 competition \citep{iccma2023}.
In our evaluation we measured the time it takes to find one extension\footnote{We use the \texttt{clingo} flag \texttt{-n 1} to limit clingo to finding one extension only.}. For each argumentation framework, we construct answer set programs using our learned encoding and the ASPARTIX encoding for the given semantics. We record the time taken to complete the task for each argumentation framework. 
Figure \ref{fig1} shows average PAR-2 scores for admissible, complete, grounded, preferred and stable semantics on the ICCMA-23 dataset. The PAR-2 score is the index used to rank solvers in the ICCMA-23 competition, and it is defined for any specific instance as $2\cdot 1200$ if a threshold time limit ($1200$ seconds) is reached, and solving time otherwise.

\begin{figure}[t]
\centering
\includegraphics[width=\columnwidth]{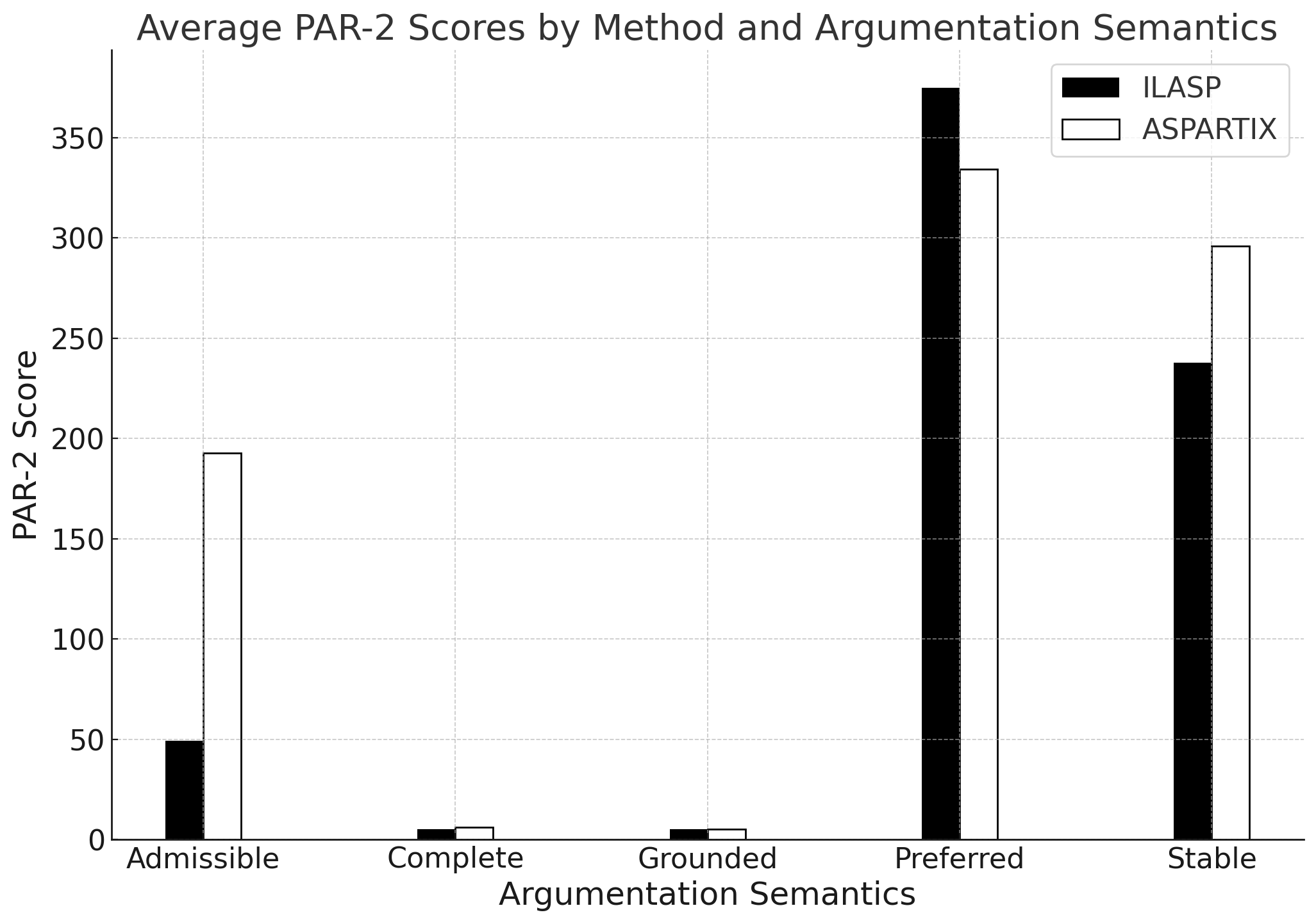}
\caption{Average PAR-2 scores for ASPARTIX and ILASP for different semantics on the ICCMA-23 dataset (the lower the better).}
\label{fig1}
\end{figure}

Our approach exhibits remarkable scalability compared to  ASPARTIX on the admissible and stable semantics. For the complete and grounded semantics, ASPATRIX and the ILASP-learned encodings do not show significant differences. On the other hand, for what concerns the preferred semantics, ASPARTIX shows better performance than ILASP-learned encodings.

\subsection{Comparison to Deep Learning methods}



Compared to the Deep Learning algorithm \citep{cb:20}, which was trained on one million examples and tested on a randomly generated set of $1000$ frameworks (each containing from $5$ to $25$ arguments), our method stands out, as it achieves perfect accuracy while not requiring as much data for the training phase.

Unlike our approach, which trivially achieves perfect accuracy for all the considered semantics ($MCC=1$, where $MCC$ is the Matthew Correlation Coefficient:\\
\begin{center}
\footnotesize$MCC=\frac{TP \cdot TN - FP \cdot FN}{\sqrt{(TP + FP)(TP + FN)(TN + FP)(TN + FN)}}$
\end{center}
for $TP$ the True Positive rate, etc.), the considered Deep Learning algorithm reaches $MCC=1$ only for the grounded semantics, and achieves near-perfect performance for the stable ($MCC=0.998$), preferred ($MCC=0.998$), and complete ($MCC=0.999$) semantics, while still needing larger amounts of training data compared to our proposed method. In fact, we use only $7$ examples to learn the admissible semantics, $8$ for both the stable and the complete semantics, $16$ for the preferred semantics, and $27$ for the grounded semantics, consistently achieving perfect accuracy ($MCC=1$). In contrast, when training the Deep Learning model on $30$ examples, which exceeds the number required by our method for any of the semantics, the MCC metric remains much lower than $1$. In this case, the maximum $MCC$ achieved by the Deep Learning algorithm is $0.39$ for the grounded semantics.

Table \ref{table2} summarizes MCC performance for each semantics after training the Deep Learning method on $30$ examples, for frameworks with $5$ to $25$ arguments. Despite training for $100$ epochs, extending the training time does not improve the outcome. The results consistently fall short of those achieved by our method.

\begin{table}
\centering
\begin{tabular}{|l|c|c|c|c|c|}
\hline
& Stable & Preferred & Complete & Grounded \\ \hline
MCC & 0.05386  & 0.30265    & 0.21286   & 0.38847   \\ \hline
\end{tabular}
\caption{MCC metric on the test set, when training the Deep Learning method in \citep{cb:20} on 30 examples for 100 epochs.}\label{table2}
\end{table}

Furthermore, the size of the frameworks we use for learning is small. For the preferred and the grounded semantics, the frameworks contain at most $5$ arguments, while for the stable, the admissible and the complete semantics at most $4$.

\subsection{Compactness and Interpretability}
Using ILASP, we have learned a streamlined and intelligible representation of each semantics. The rules we have learned are not only concise but are also transparent, making them easily explainable. In comparison, the ASPARTIX encodings contain more rules, especially for the grounded and the preferred semantics. For instance, while we employ $5$ rules for each of these two semantics in AAF, ASPARTIX's encodings use $13$ and $33$ rules, respectively. Additionally, the state-of-the-art Deep Learning algorithm we used in the experiments is not interpretable and does not achieve perfect accuracy even when training on very large datasets.

\section{Conclusion and Future Work}
This paper presents a novel approach to learning the acceptability semantics of argumentation frameworks, which relies on Learning from Answer Sets. We constructed a unified framework for learning the semantics of four argumentation frameworks. We proved the equivalence of the semantics learned by our $LAS_{arg}$ framework with the manually engineered ASPARTIX encodings for the stable, complete, and admissible semantics. In addition, empirical evaluations demonstrate that our method, while being able to learn from data, sometimes achieves better accuracy, data efficiency, and time performance when compared to other state-of-the-art methods.

The achievements delineated in this paper pave the way for multiple avenues of further research. First and foremost, we intend to explore the learning of domain-specific custom semantics. Recognizing that individual reasoning about the acceptability of arguments may vary widely in real-world contexts, our framework is designed to accommodate these unique perspectives, extending beyond the five known semantics considered within this study. Moreover, our ambitions extend to practical applications, where the learned encodings can be harnessed to ascertain accepted arguments from real-world dialogues. This can be achieved by integrating our method with other Machine Learning or Natural Language Processing tools dedicated to argument extraction from dialogues. These advancements may open up exciting possibilities for both theoretical exploration and practical utilization.

\bibliographystyle{kr}
\bibliography{kr-sample}

\end{document}